\newtheorem{theorem}{Theorem}
\newtheorem*{defn*}{Definition}
\newtheorem*{theorem*}{Theorem}
\title{Do Current Multi-Task Optimization Methods in Deep Learning Even Help?}
\author{%
  Derrick Xin\thanks{Equal contribution} \\
  Google Research \\
  Mountain View, CA \\
  \texttt{dxin@google.com} \\
  \And
  Behrooz Ghorbani$^*$ \\
  Google Research \\
  Mountain View, CA \\
  \texttt{ghorbani@google.com} \\
  \And
  Ankush Garg \\
  Google Research \\
  Mountain View, CA \\
  \texttt{ankugarg@google.com}
  \And
  Orhan Firat \\
  Google Research \\
  Mountain View, CA \\
  \texttt{orhanf@google.com}
   \And
  Justin Gilmer \\
  Google Research \\
  Mountain View, CA \\
  \texttt{gilmer@google.com}
}
\newcommand{\cL}{\mathcal{L}}
\newcommand{\bth}{\boldsymbol{\theta}}
\newcommand{\bw}{\boldsymbol{w}}
\newcommand{\bx}{\boldsymbol{x}}
\newcommand{\R}{\mathbb{R}}
\newcommand{\E}{\mathbb{E}}
\newcommand{\Prb}{\mathbb{P}}
\begin{document}

\maketitle

\begin{abstract}
Recent research has proposed a series of specialized optimization algorithms for deep multi-task models. It is often claimed that these multi-task optimization (MTO) methods yield solutions that are superior to the ones found by simply optimizing a weighted average of the task losses. In this paper, we perform large-scale experiments on a variety of language and vision tasks to examine the empirical validity of these claims. We show that, despite the added design and computational complexity of these algorithms, MTO methods do not yield any performance improvements beyond what is achievable via traditional optimization approaches. We highlight alternative strategies that consistently yield improvements to the performance profile and point out common training pitfalls that might cause suboptimal results. Finally, we outline challenges in reliably evaluating the performance of MTO algorithms and discuss potential solutions. 
\end{abstract}

\section{Introduction}
Multitask models are ubiquitous in deep learning \cite{arivazhagan2019massively, bapna2022mslam, lepikhin2020gshard}. This popularity stems from the fact that these models can potentially leverage transfer learning in between different tasks and modalities. Moreover, by reducing the number of the models that need to be maintained, multitask models greatly simplify serving users. 

Multitask models come with their own challenges and downsides. Different tasks often compete with each other for model capacity, leading to \emph{the task interference} problem. Finding the training setting that strikes the right balance between different tasks is an engineering intensive endeavor that requires extensive trial and error for most realistic setups.

Over the past few years, a vast number of multi-task optimization (MTO) algorithms have been proposed in the literature that claim to alleviate the task interference problem \cite{chen2020just, fifty2021efficiently, lin2021closer, liu2021towards, sener2018multi, wang2020gradient, yu2020gradient}. These algorithms typically leverage clever intuitions about the training process to dynamically balance the different tasks throughout  training. However, in exchange for this, these algorithms often drastically add to the computational and design complexity of the training process.

The goal of this paper is not to provide another MTO algorithm. Instead, we provide a large-scale empirical study of the algorithms presented in the literature; we examine to what degree the improvements presented in the literature are reproducible and whether these algorithms really reduce loss interference between different tasks. As such, our study contributes to the growing body of literature that aims to provide a reality check on recent algorithmic proposals in the machine learning community \cite{greff2016lstm, gulrajani2020search, musgrave2021unsupervised, su2022comparison}. We provide the following observations:

\begin{itemize}
    \item Despite the added complexity, MTO algorithms fail to improve the interference profile beyond what is achievable by simple static weighting of the tasks (Section \ref{sec:pareto}). 
    \item The performance of multi-task models is sensitive to basic optimization parameters such as learning rate and weight-decay. Insufficient tuning of these hyper-parameters in the baselines, along with the complexity of evaluating multi-task models, can create a false perception of performance improvement (Section \ref{sec:pareto}). 
    \item In some instances, the gains reported in the MTO literature are due to flaws in the experimental design. Often times these reported gains disappear with better tuning of the baseline hyperparameters. In addition, in a handful of cases, we were unable to reproduce the reported results (Section \ref{sec:vision}). 
    \item Finally, we discuss the implications for the community and the potential steps that need to be taken to standardize evaluation for multi-task models (Section \ref{sec:conclusions}).
\end{itemize}

\section{Setting} \label{sec:setting}

We focus our discussion on the supervised learning setup, where the model parameters, $\bth \in \R^p$, are trained on $K$ different tasks. We denote the loss associated with task $i$ with $\cL_i(\bth)$. 

For some problem instances, the parameter space contains a globally optimal point that achieves the best possible performance on all tasks. Figure \ref{fig:cartoon} (left) provides a cartoon example of one such scenario. However, for most realistic setups, a globally optimal $\bth$ doesn't exist. In these cases, different tasks compete with each other for model capacity. In these scenarios, the concept of Pareto optimality is used to capture the optimal trade-off in between the tasks:

\begin{defn*}[Pareto Optimality]
$\bth \in \R^{p}$ Pareto dominates another $\bth'$ if $\forall 1 \leq i \leq K$, $\cL_i(\bth) \leq \cL_i(\bth')$ and there exists a task $j$ where $\cL_j(\bth) < \cL_j(\bth')$. $\bth$ is Pareto optimal if it is not dominated by any other point. The collection of the Pareto optimal points is denoted as Pareto front. 
\end{defn*}

Figure \ref{fig:cartoon} (center) provides a cartoon representation of the Pareto front for a two-task setup. The Pareto front represents the collection of parameters that achieve the best possible trade-off profile between the tasks. A practitioner can aim to land on a particular point on this trade-off curve depending on their (implicit or explicit) utility function. The location and the curvature of the Pareto curve represent the severity of the interference problem. Ideally, one would like to identify training protocols that push the trade-off curve towards the origin as much as possible (Figure \ref{fig:cartoon}-right).


\begin{figure}[h]
\centering
\includegraphics[width=.9\linewidth]{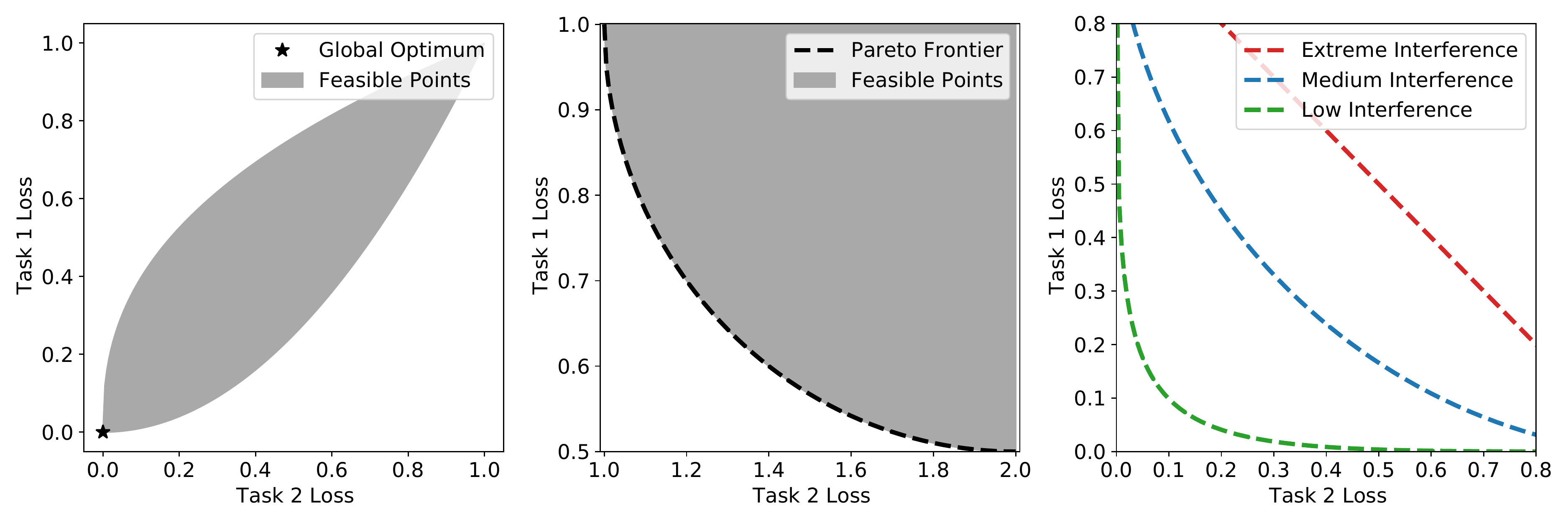}
\caption{A cartoon representation of the achievable trade-offs in a two-task setup. \label{fig:cartoon}}
\end{figure}

The traditional approach to optimize multi-task models is via \emph{scalarization} \cite{boyd2004convex}:
\begin{align} \label{eq:scalar}
\hat{\bth}(\bw) = \arg \min_{\bth} \cL(\bth; \bw) \quad \mbox{where} \quad \cL(\bth; \bw) \equiv  \sum_{i=1}^K \bw_i \cL_i(\bth), \quad \bw > 0, \quad \sum_i \bw_i = 1.
\end{align}

Here, $\bw$ is a fixed vector of task weights determined by the practitioner beforehand. The algorithmic and computational simplicity of this approach has made scalarization highly popular in practice.

Scalarization comes with certain theoretical guarantees. It can be easily shown that any solution to problem \eqref{eq:scalar} is guaranteed to be Pareto optimal. In addition, when $\{\cL_i\}_{i=1}^K$ are convex, there exists a partial converse:

\begin{theorem*}[Informal]
Let $\bth^\#$ be a point on the Pareto front. Then there exists $\bw^\#\geq 0$ such that scalarization with $\bw^\#$ yields $\bth^\#$. \footnote{The precise statements and their proofs are provided in the appendix.}
\end{theorem*}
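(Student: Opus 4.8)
The plan is to argue in the space of achievable loss vectors rather than in parameter space, where convexity can be exploited through a supporting hyperplane. Define the \emph{dominated loss set}
\[
\mathcal{A} = \{ \bx \in \R^K : \exists\, \bth \in \R^p \text{ with } \cL_i(\bth) \leq \bx_i \text{ for all } 1 \leq i \leq K \},
\]
i.e.\ the upward closure of the image of the map $\bth \mapsto (\cL_1(\bth), \dots, \cL_K(\bth))$. First I would verify that $\mathcal{A}$ is convex: given two points of $\mathcal{A}$ with witnesses $\bth, \bth'$, the convex combination $\lambda \bth + (1-\lambda)\bth'$ witnesses the corresponding convex combination of the two loss vectors, since each $\cL_i$ is convex. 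Note that the image itself need not be convex; passing to its upward closure is what makes the argument work, and this set is also \emph{upward closed} by construction (if $\bx \in \mathcal{A}$ and $\by \geq \bx$ componentwise, then $\by \in \mathcal{A}$).

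Next I would record the loss vector $\bx^\# = (\cL_1(\bth^\#), \dots, \cL_K(\bth^\#))$ of the Pareto point and argue that it lies on the boundary of $\mathcal{A}$. Indeed, were $\bx^\#$ interior, one could decrease every coordinate slightly and remain in $\mathcal{A}$, producing an achievable loss vector that strictly dominates $\bth^\#$ in every task and contradicting Pareto optimality. With $\bx^\#$ a boundary point of a convex set, the supporting hyperplane theorem yields a nonzero normal $\bw^\#$ with $\langle \bw^\#, \bx^\# \rangle \leq \langle \bw^\#, \bx \rangle$ for every $\bx \in \mathcal{A}$.

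Then I would establish that the normal is nonnegative. If some coordinate $\bw^\#_j$ were negative, upward closedness lets us push $\bx^\#$ arbitrarily far in the $+e_j$ direction while staying in $\mathcal{A}$, driving $\langle \bw^\#, \bx \rangle \to -\infty$ and violating the supporting inequality; hence $\bw^\# \geq 0$. Finally, since the loss vector of every $\bth$ lies in $\mathcal{A}$, the supporting inequality specializes to $\sum_i \bw^\#_i \cL_i(\bth^\#) \leq \sum_i \bw^\#_i \cL_i(\bth)$ for all $\bth$, i.e.\ $\bth^\#$ minimizes the scalarized objective $\cL(\,\cdot\,; \bw^\#)$; rescaling $\bw^\#$ so its entries sum to one completes the argument.

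The main obstacle I anticipate is the gap between the $\bw^\# \geq 0$ produced by separation and the strict positivity $\bw > 0$ demanded in \eqref{eq:scalar}, which is precisely why the converse is only \emph{partial}. The supporting hyperplane may have vanishing components (for instance at the extreme points of the front), so in general one recovers only weak Pareto optimality, and the precise statement must either settle for $\bw^\# \geq 0$ or impose additional regularity (such as strict convexity, compactness of $\mathcal{A}$, or a relative-interior/Slater-type condition) to guarantee a strictly positive normal and that $\bth^\#$ is the unique minimizer. Carefully handling this boundary behavior, rather than the separation step itself, is where the real care is needed.
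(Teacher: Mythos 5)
Your argument is correct and is essentially the proof the paper defers to: the appendix proves this statement by citing Section 4.7.4 of Boyd \& Vandenberghe, which is exactly your construction --- take the upward-closed set $\mathcal{A}$ of achievable loss vectors, note it is convex, and apply the supporting hyperplane theorem at the Pareto point to extract a nonzero $\bw^\# \geq 0$. Your closing remark about why the normal need not be strictly positive also correctly identifies why the paper calls this only a partial converse and states the appendix version with $\bw \geq 0$, $\bw \neq 0$.
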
  
These results suggest that, at least for convex setups, sweeping the task weights should be sufficient for full exploration of the Pareto frontier. In particular, {\bf in the convex setting it is provable that no algorithm can outperform properly chosen scalarization that has been trained to convergence.}

The results above raise a series of questions. Where are the reported improvements of MTO algorithms coming from? Is non-convexity adding additional complexity which makes scalarization insufficient for tracing out the Pareto front? Is it the case that neural networks trained via a combination of scalarization and standard first-order optimization methods are not able to reach the Pareto Frontier? Do MTO algorithms achieve a better performance trade-off curve? In following sections, we empirically examine these questions for several popular deep learning workloads.

\section{Prior Work} \label{sec:MTO_overview}
There has been a flurry of research on MTO algorithms over the past few years. \cite{kendall2018multi, chen2018gradnorm} argue that finding the appropriate scalarization weights is often costly. To alleviate this, they provide algorithms that aim to automatically find a reasonable set of task weights. Sener \& Koltun (2019) \cite{sener2018multi} approach multi-task learning from a multi-objective optimization view point and suggest Multiple Gradient Descent Algorithm (MGDA) for efficiently finding Pareto optimal solutions. \cite{li2021robust, liu2021towards, wang2020gradient, yu2020gradient} hypothesize that negative interactions between the gradients of different tasks is a significant contributor to the interference problem. As such, these studies put forward various suggestions for projecting out conflicting gradients in order to improve the optimization dynamics. Finally, \cite{chen2020just, lin2021closer} propose algorithms that inject randomization into the training pipeline and argue that this added randomness improves the training dynamics by allowing the optimization trajectory to escape poor local minima. 

It is important to note that MTO algorithms often come with substantial computational overhead. Chen et al. (2020) report 2-5 fold increase in the training time for a 40-task benchmark \cite{chen2020just}. Similarly, Kurin et al. (2022) observe that on some benchmarks MTO algorithms can train as much as $35$ times slower compared to scalarization \cite{kurin2022defense}.

Recently, there has been a number of studies that critically question the benefits of MTO algorithms. The closest such study to ours is Kurin et al. (2022) \citep{kurin2022defense} that appeared on Arxiv during the preparation of this manuscript. The paper argues that MTO algorithms implicitly regularize the model and shows that with careful regularization, scalarization with equal weights can match the performance of MTOs on various popular benchmarks. 

In contrast, we argue that MTO algorithms yield different solutions on the same trade-off curve (See Figure \ref{fig:nmt_frzh} for an example). In most cases, these solutions tend to be different from equal weighting scalarization solution. When performance on popular benchmarks is concerned, we argue that scalarization baselines are often under-tuned. With additional tuning of the hyper-parameters, we find that most optimizers yield comparable results. 
\section{Experiments} \label{sec:pareto}
\subsection{Multilingual Machine Translation} \label{subsec:nmt}
In this section, we examine the effect of MTO algorithms on multilingual neural machine translation (NMT). In particular, we focus on translation out of English as prior work has reported significant task interference in this translation direction \cite{arivazhagan2019massively}. 

We start off by examining models trained jointly on English$\rightarrow$\{French, Chinese\} translation tasks. The two-task setup allows us to effectively visualize the performance trade-off curves. French and Chinese are specifically chosen due to the large difference in their semantic and syntactic structures. Here, we anticipate a large degree of interference among the tasks---a setting where MTOs claim to improve upon scalarization. We repeat our experiments for English$\rightarrow$\{French, German\} and English$\rightarrow$\{French, Romanian\} translation tasks to ensure that our observations generalize across different task setups with different levels of data imbalance. See Table \ref{tab:data_details} for an overview of data sources. All models use (pre-LN) Transformer architecture \cite{vaswani2017attention} and have been trained using early stopping. See Appendix \ref{app:nmt_details} for training details.  

\begin{table}[h]
  \caption{Overview of data sources used in our NMT experiments. \label{tab:data_details}}
  \centering
  \begin{tabular}{llcc}
    \toprule
    Language Pair    & Dataset  & \# Train Examples & \# Eval Examples \\
    \midrule
    English-French   &   WMT15  & $40,853,298$ & $4,503$   \\
    English-Chinese  &   WMT19  & $25,986,436$ & $3,981$   \\
    English-German   &   WMT16  & $4,548,885$  & $2,169$   \\
    English-Romanian &   WMT16  & $610,320$    & $1,999$   \\
    \bottomrule
  \end{tabular}
\end{table}
We compare the performance trade-offs achieved by various popular MTO algorithms with the Pareto frontier of scalarization. Following the NMT literature's convention, we implement scalarization via proportional sampling. Here, the average number of observations in the batch corresponding to task $i$ is proportional to $\bw_i$. In this setup, the expected training loss is equal to 
\begin{align*}
    \cL(\bth) = \E_{\bx}[\ell(\bx; \bth)] = \sum_{i=1}^K \Prb(\bx \in \mbox{task } i) \E_{\bx}[\ell(\bx; \bth) | \bx \in \mbox{task } i] = \sum_{i=1}^K \bw_i \cL_i(\bth).
\end{align*}

We compare scalarization with a series of popular MTO algorithms: Multiple Gradient Descent (MGDA) \cite{sener2018multi, desideri2012multiple, li2021robust}, GradNorm \cite{chen2018gradnorm}, Gradient Surgery (PCGrad) \cite{yu2020gradient}, IMTL \cite{liu2021towards}, and Random Loss Weighting (RLW) \cite{lin2021closer}. For GradNorm's $\alpha$ hyper-parameter, we perform a grid search and report all non-Pareto dominated models. To give an apples-to-apples comparison, all models have been trained with the same batch-size for the same number of training steps. All models use Adam \cite{kingma2014adam} as the base optimizer. For all these optimizer categories, we tune the learning rate on a grid from $5\times 10^{-2}$ to $5$ and report all non-Pareto dominated models. Details of the training and hyper-parameters are presented in Appendix \ref{app:nmt_details}. 

The overview of our experimental findings are presented in Figures \ref{fig:nmt_frzh}, \ref{fig:nmt_defr}, and \ref{fig:nmt_rofr}. The blue dashed line corresponds to the Pareto front achieved via proportional sampling with English$\rightarrow$French sampling rate ranging from 10\% to 90\%. Several observations are in order: 

\paragraph{No Improvements from MTO Algorithms} Despite the promise to alleviate interference among the tasks, all of the MTO algorithms in our study simply yield performance trade-off points on the scalarization Pareto front. As such, their performance can be fully replicated by simply optimizing a weighted average of the losses. To understand this phenomenon better, in Figure \ref{fig:weights}, we plot the evolution of the task weights for PCGrad, MGDA, GradNorm, and IMTL during training. We observe that for the majority of the training runs, the dynamically assigned task weights do not move significantly. As such, in effect, these MTO algorithms behave similar to static weighting.     

\begin{figure}[h]
\centering
\includegraphics[width=0.9\linewidth]{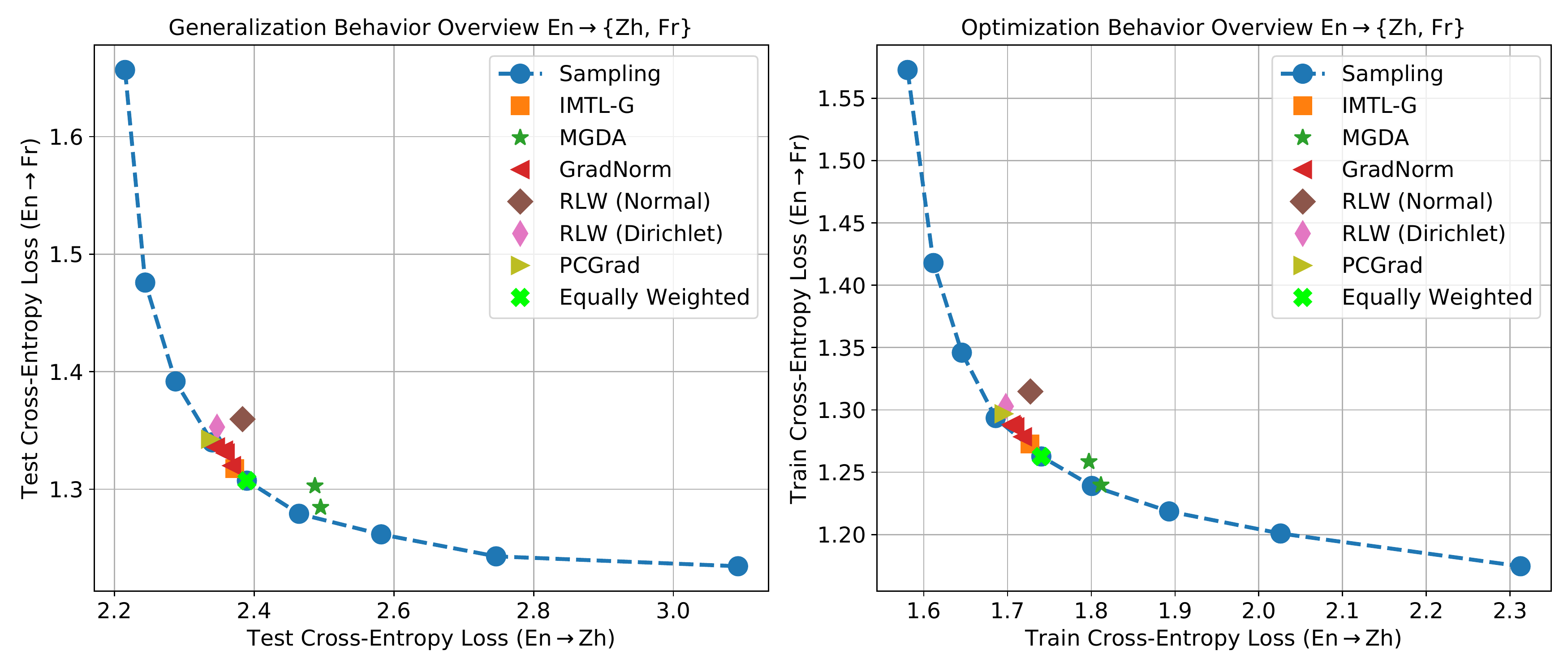}
\caption{Performance trade-off behavior for En$\rightarrow$\{Fr, Zh\} models. Each point corresponds to the final performance of a model. We observe no improvements in terms of final performance or training behavior from MTO algorithms. See Appendix \ref{app:nmt_results} for more comparisons. \label{fig:nmt_frzh}}
\end{figure}

\begin{figure}
\centering
\includegraphics[width=\linewidth]{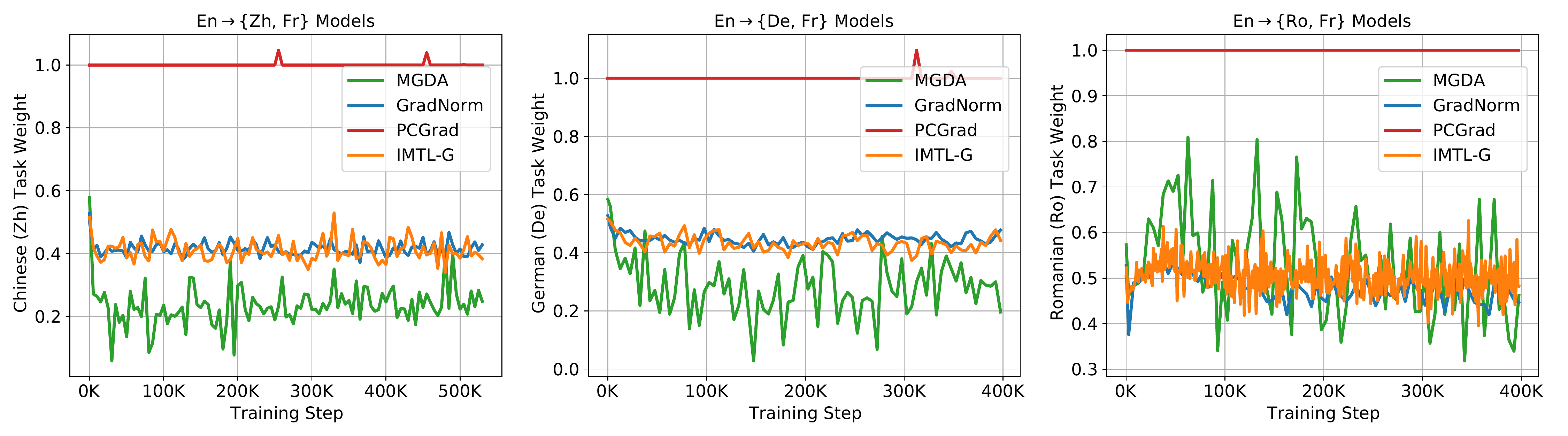}
\caption{The evolution of task weights during training. All models are trained with the same (near optimal) base learning rate of $0.5$. For the majority of the runs, the task weights barely move. \label{fig:weights}}
\end{figure}

\paragraph{Other Language Pairs}En$\rightarrow$Fr and En$\rightarrow$Zh are both high-resource tasks with $O(10^7)$ training examples. In these experiments, we observe minimal overfitting and excellent agreement between train and test behaviors. One might argue that MTO algorithms possess transfer learning and regularization capabilities En$\rightarrow$\{Zh, Fr\} experiments downplay \footnote{See \cite{kurin2022defense} for an overview of proposals on how MTO algorithms can perform implicit regularization.}. To address this, we repeat our experiments in two new task setups where En$\rightarrow$Zh task is substituted with En$\rightarrow$De (mid-resource) and En$\rightarrow$Ro (low-resource). 

Figures \ref{fig:nmt_defr} \& \ref{fig:nmt_rofr} present the results of these experiments. En$\rightarrow$\{De, Fr\} experiments closely resemble En$\rightarrow$\{Zh, Fr\} ones: MTO algorithms simply achieve different trade-off points on the scalarization Pareto front. The results for En$\rightarrow$\{Ro, Fr\} are more interesting. We still observe a clear Pareto front for the training performance; different MTO algorithms achieve different points on this curve.  For the generalization behavior however, the Pareto frontier ceases to exist. Instead, we observe models that are \emph{globally optimal}. Interestingly, these globally optimal solutions are found only by scalarization (with sampling rates close to $(0.3, 0.7)$). In contrast, MTO algorithms find solutions with near equal task weights which yield significantly worse generalization performances. As the generalization performance in this setup is primarily driven by the amount of regularization applied to the low-resource task during training, our results cast doubt on the ability of MTO algorithms to effectively regularize the model. 

\begin{figure}[h]
\centering
\includegraphics[width=.9\linewidth]{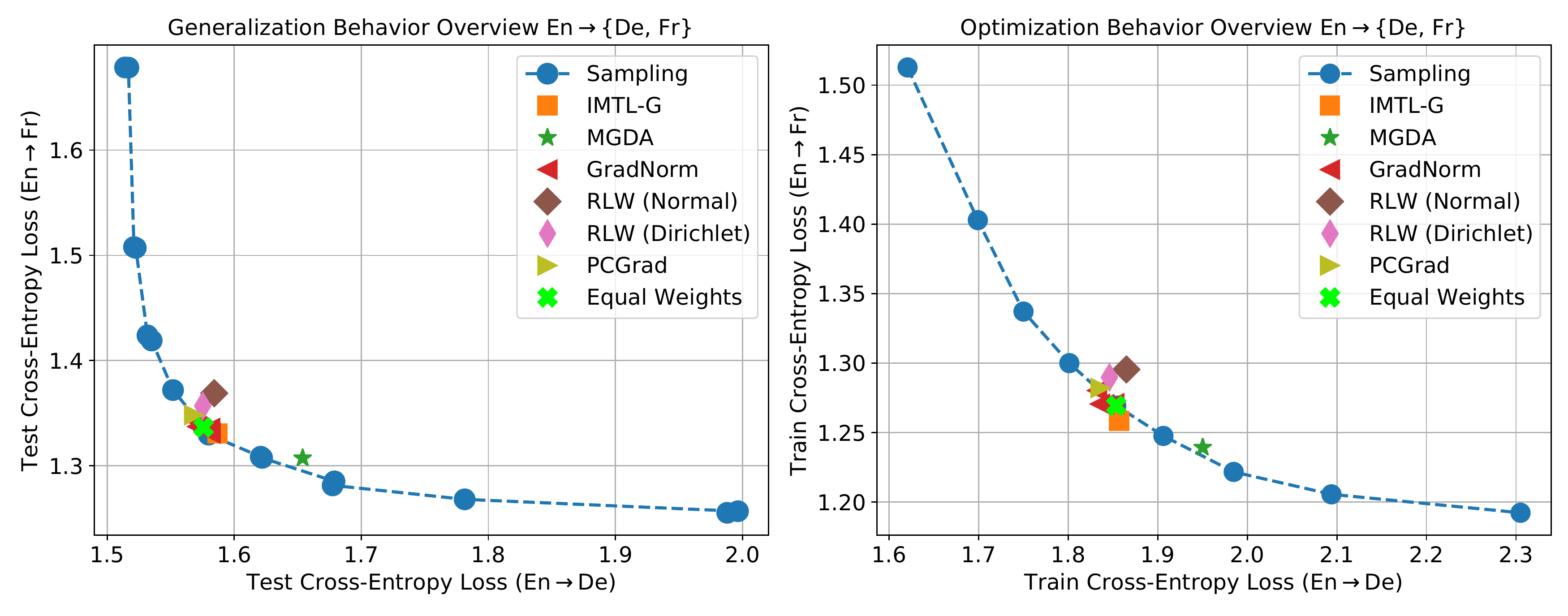}
\caption{Performance trade-off behavior for En$\rightarrow$\{De, Fr\} models. The results closely resemble our observations on En$\rightarrow$\{Zh, Fr\} experiments. \label{fig:nmt_defr}}
\end{figure}

\begin{figure}[h]
\centering
\includegraphics[width=.9\linewidth]{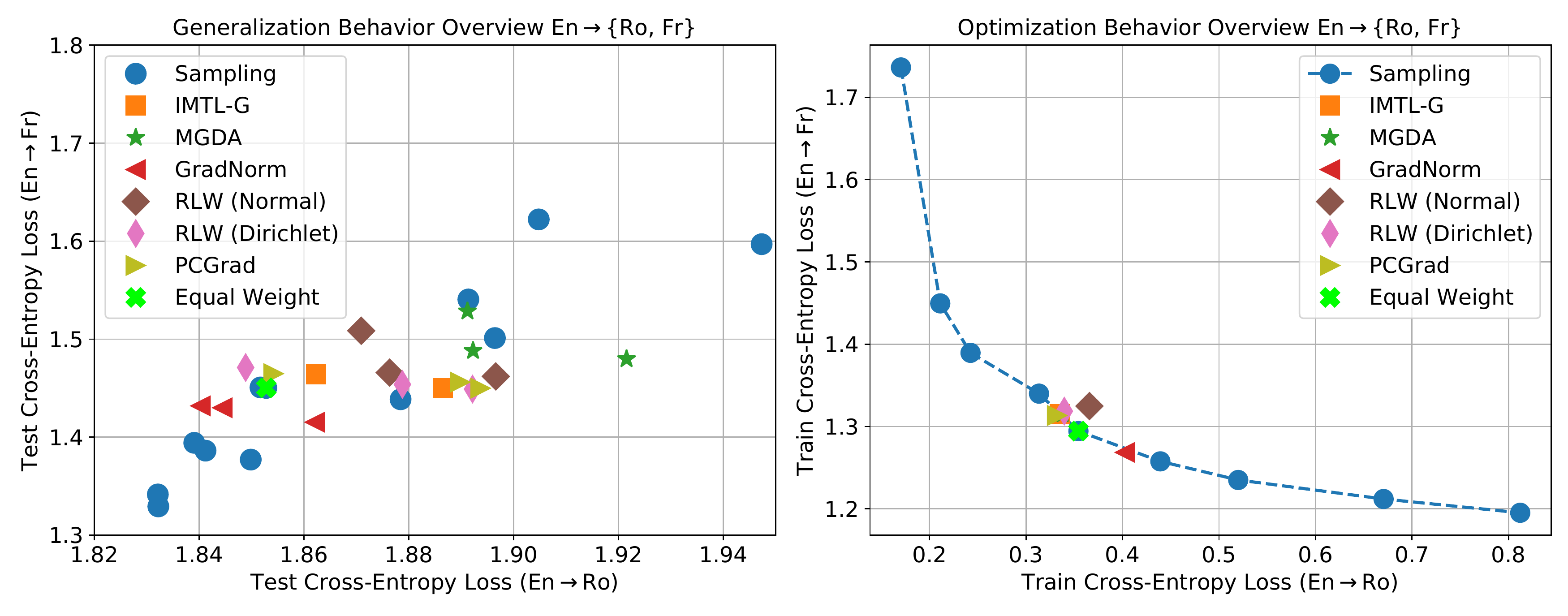}
\caption{Performance trade-off behavior for En$\rightarrow$\{Ro, Fr\} models. {\em Left:} We do not observe a Pareto frontier for the test performance. Instead, scalarization with weights $(0.3, 0.7)$ achieves a globally optimal trade-off and outperforms MTO algorithms. {\em Right:} For the training loss, a clear Pareto frontier appears and MTO algorithms simply selects a point on the performance profile traced out by the scalarization sweep.  \label{fig:nmt_rofr}}
\end{figure}

\paragraph{Evaluation Challenges}
Our experiments suggest that the model performances are highly sensitive to the choice of hyper-parameters. Even subtle choices regarding the hyper-parameter grid can drastically change the results. For example, it is common practice to tune the learning rate on a sparse grid, say sweeping $\eta \in [10^{-3}, 10^{-2}, 10^{-1}]$. How much do our reported metrics suffer from such a sparse sweep, and how much performance can be gained on average from further tuning of just this one hyperparameter? To answer this question, we simulate running multiple instances of sparse grid searches of the form $\{k \times 10^{-3}, k \times 10^{-2}, k \times 10^{-1}\}$ for $1 \leq k \leq 9$; each choice of $k$ produces a tuning study sweeping over 3 learning rates. We then measure the variance in the optimal performance of each 3 trial study, as $k$ is varied from 1 to 9. The results are shown in Figure \ref{fig:nmt_hparams} (left). For comparison, we plot the variance in performance resulting from running the best $\eta$ multiple times with different seeds. Notably, the effective standard deviation resulting from sparse learning rate tuning is 6 to 7 times the standard deviation observed from varying the random seed for a fixed hyperparameter point. The upshot is, {\bf estimating trial variance by rerunning multiple seeds is insufficient for concluding that performance gains from a new algorithm are significant when the hyperparameters are sampled on a sparse grid}. 

The established convention in the literature for ranking MTO performances is to compare some kind of average of the per-task performances. The specific average used is fixed somewhat arbitrarily for the purposes of benchmarking. However, in practice, the utility function for ranking algorithms may vary dramatically depending on the goals of the practitioner. Thus, useful MTOs need to be robust to changes in the utility function. Either they need to improve upon the performance profile curve traced out by a sampling sweep, or reliably find better points on the profile curve with minimal tuning as the utility function is varied. Unfortunately, the current practice to consider just one (arbitrary) weighting scheme will bias the evaluation towards algorithms that perform well on that specific scheme but are not robust to changes in the utility function. For example, Figure \ref{fig:nmt_hparams} (right) ranks 3 MTOs as the evaluation per-task weighting is varied. As the En$\rightarrow$Zh weight is varied, the ranking shifts from MGDA being the best MTO to PCGrad being the best. This is a natural consequence of Figure~\ref{fig:nmt_frzh} which shows that different MTOs find different points on the same Pareto front traced out by a sampling sweep. Notably, no algorithm outperforms sampling with a well-chosen sampling ratio. 

\begin{figure}[h]
\centering
\includegraphics[width=\linewidth]{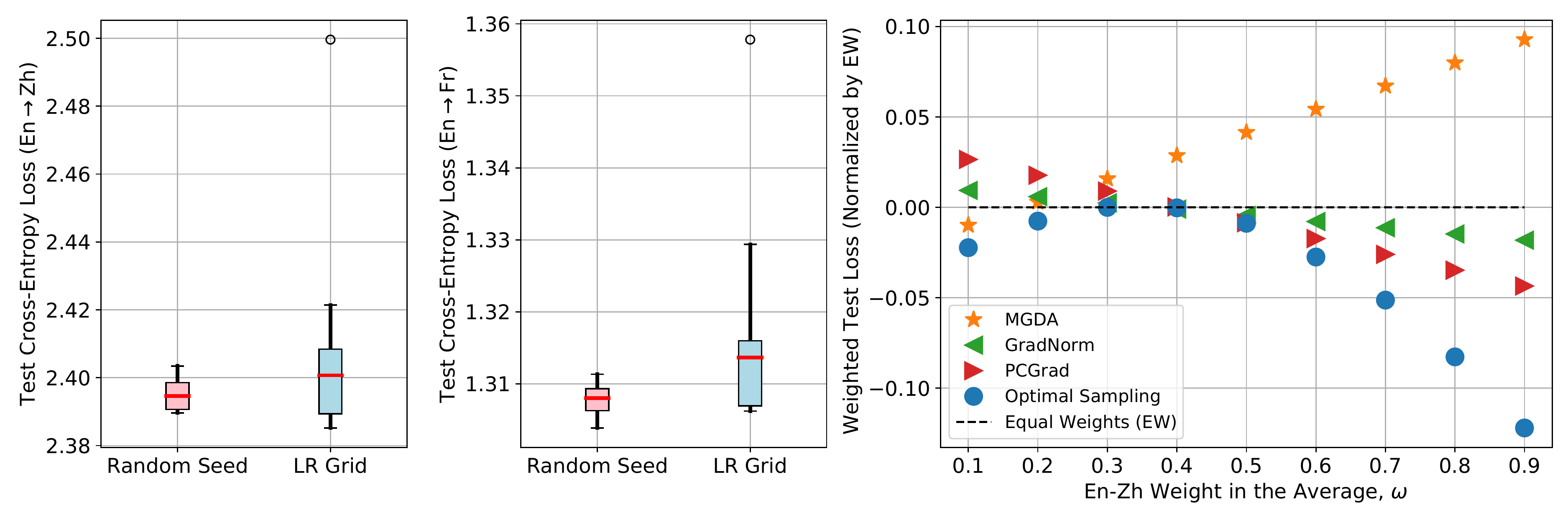}
\caption{ \label{fig:nmt_hparams} {\em Left:} Sparse sampling of learning rates has a significantly larger affect on reported performance than varying the random seed of a particular hyperparameter point. {\em Right:} Rankings between algorithms can depend on how tasks are weighted at evaluation time. For example, if the eval performance is ranked according to $.9$ En-Fr + $.1$ En-Zh, then MGDA is preferred to GradNorm and PCGrad. At equal weighting, GradNorm and PCGrad outperform MGDA. For any weighting, all MTOs underperform an optimally chosen sampling scheme.}
\end{figure}

\paragraph{Alternative Approaches} As discussed in Section \ref{sec:MTO_overview}, MTO algorithms often drastically increase the algorithmic and computational complexity of the training process. In our experiments, we observed that the requirement to compute per-task gradients (which is necessary for many MTO algorithms) led to a significant reduction in the number of training steps per second (from $\approx 12$ to $\approx 5$). Given these observations, it is natural to wonder if there are more effective ways to spend this extra compute budget. Figure \ref{fig:scaling} examines how scaling the model size changes the performance trade-off behavior. We examine increasing the model depth by a factor of $\{1, 2, 3, 4\}$. Our largest model achieves on average $5.4$ training steps per second, which is comparable with the models trained using per-task gradients. Our results suggest that, unlike the observed behavior with MTO algorithms, allocating more compute to scale the model yields consistent improvements across the board. Larger models achieve Pareto fronts strictly to the lower-left of the base model, which corresponds to a performance improvement for all utility functions. 

\begin{wrapfigure}{r}{0.4\textwidth}
\centering
\includegraphics[width=.9\linewidth]{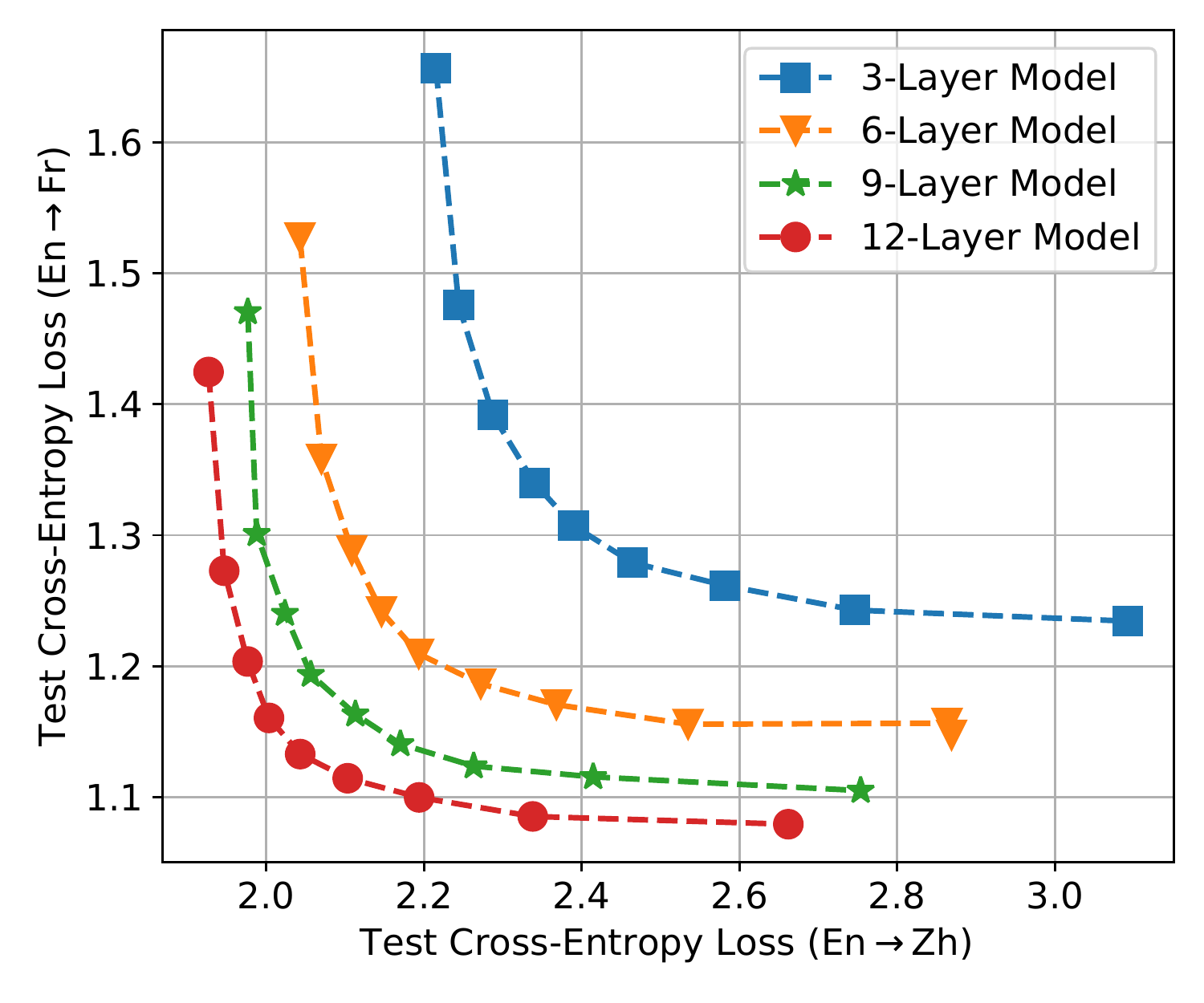}
\caption{The effect of model size on the Pareto frontier for En$\rightarrow$\{Zh, Fr\} models. \label{fig:scaling}} 
\end{wrapfigure}

\begin{table}[h]
  \caption{Overview of models used in the scaling experiment. \label{tab:model_details}}
  \centering
  \begin{tabular}{llcc}
    \toprule
    Model    & Optimizer  & \# Parameters & \# Steps/Sec \\
    \midrule
    3L-3L (base)   &   Scalarization  &  $120$M &  $11.52$  \\
    3L-3L   &   MGDA    & $120$M &  $4.81$  \\
    6L-6L   &   Scalarization  & $142$M & $8.28$   \\
    9L-9L   &   Scalarization  & $165$M & $6.68$   \\
    12L-12L &   Scalarization  & $187$M   &  $5.48$  \\
    \bottomrule
  \end{tabular}
\end{table}

\subsection{Benchmarks from the Literature}
\label{sec:vision}
The observations of Section \ref{subsec:nmt}, run contrary to many recent influential studies proposing MTOs for multi-task models \cite{chen2018gradnorm, chen2020just, fifty2021efficiently, sener2018multi, yu2020gradient}. These papers often compare the performance of their proposed algorithm with traditional training strategies and report significant gains. In this section, we attempt to reproduce these results on a number of supervised-learning benchmarks. We present comparisons for CityScapes \cite{cordts2016cityscapes} and CelebA \cite{liu2015deep} datasets in the main text.\footnote{See Appendix \ref{app:vision} for more comparisons and details.}  

For these experiments, we closely follow the experimental setup and the publicly available code from \cite{sener2018multi}. We modified the code sparingly to address bugs, update deprecated libraries, and speed up the data loader. We perform an extensive grid search for learning rate, weight decay, and dropout. All models use early stopping. Our implementation details are presented in the appendix.

\subsection{CityScapes}
CityScapes \cite{cordts2016cityscapes} is a dataset for understanding urban street scenes. It is constructed via stereo video sequences from different cities and contains $2975$ training and $500$ validation images. In the multi-task optimization literature, this dataset is popularly cast as a two-task problem with one task being 7-class semantic segmentation and the other being depth estimation. In our experiments, we choose $595$ random samples from the training data to serve as our validation set. This validation set is used for tuning hyper-parameters such as learning rate and weight decay (See appendix for details). We use the original validation set as our test set.

Figure \ref{fig:cityscape_test_metric} provides an overview of our experimental results. Similar to Section \ref{subsec:nmt}, we observe that scalarization solutions form the generalization performance Pareto front. This frontier is observable both for test loss (left) and task specific generalization metrics (right). In both cases, MTO solutions significantly under-perform scalarization. 

\begin{figure}[h]
\centering
\includegraphics[width=.9\linewidth]{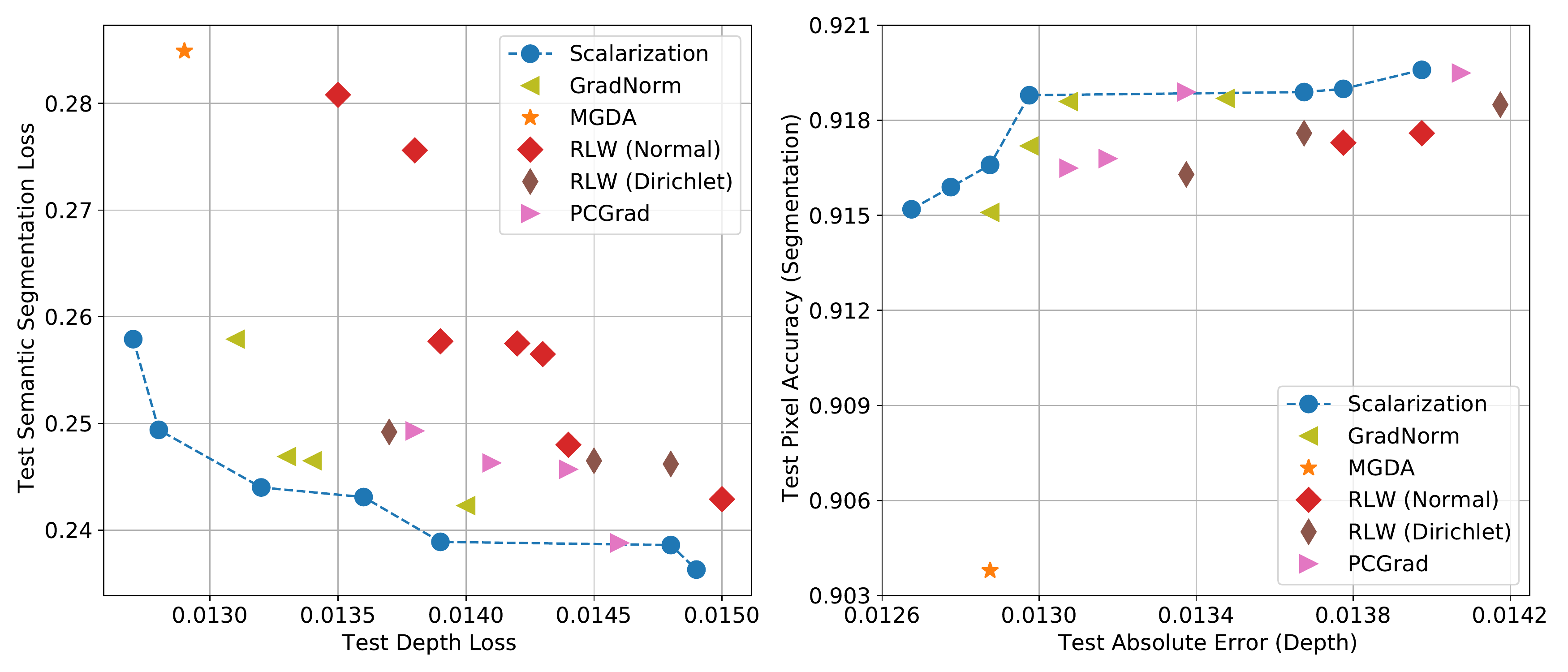}
\caption{ \label{fig:cityscape_test_metric} The generalization performance of different optimizers for CityScapes benchmark. {\em Left:} Test segmentation loss vs test depth loss. Points on the lower left side represent better solutions. {\em Right:} Test pixel accuracy vs test absolute depth estimation error. Here, points on the upper left side are better solution. In both cases, scalarization solutions form the Pareto frontier.}
\end{figure}

For CityScapes models, the segmentation task loss is an order of magnitude larger than the depth estimation task loss. This severe loss imbalance causes interesting behaviors to emerge that are worth noting. Figure \ref{fig:cistyscape_train_test} examines the train / test behavior of the different scalarization solutions. Contrary to recent results reported in the literature \cite{kurin2022defense}, we observe that appropriately balancing the different losses is crucial in achieving a desirable generalization behavior: in Figure \ref{fig:cistyscape_train_test}, the majority of the generalization Pareto frontier is populated by models with segmentation task weight less than $0.2$.

\begin{figure}[h]
\centering
\includegraphics[width=.9\linewidth]{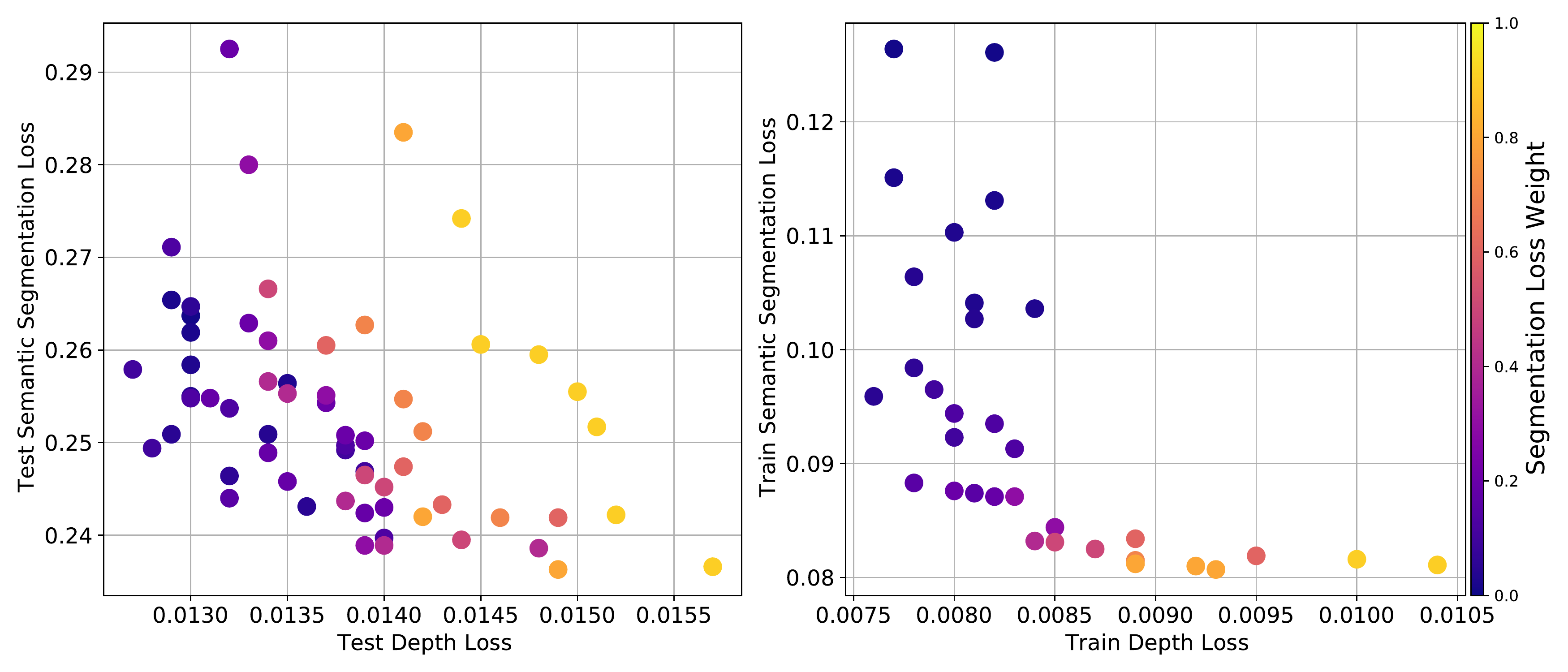}
\caption{Scalarization test ({\em left}) and train ({\em right}) Pareto frontiers for CityScapes dataset. We plot all non-dominated experimental runs per scalarization weight mixture. \label{fig:cistyscape_train_test}}
\end{figure}

\subsubsection{CelebA} \label{subsection:celebA}
CelebA dataset \cite{liu2015deep} is a collection of 200K face images annotated with $40$ attributes. This dataset is a popular benchmark for MTO research where each attribute is treated as a separate binary classification task. In Section \ref{subsec:nmt}, we identified a number of evaluation challenges for MTO algorithms, namely the significance of exact hyper-parameter tuning and the difficulty of comparing models via average performances. These evaluation challenges become highly visible for CelebA. 

Figure \ref{fig:celebA} presents the overview of our results. We report average performance across the tasks. Our results suggest that scalarization performance is comparable with the performance of popular MTO algorithms. This is in line with recent findings in the literature \cite{kurin2022defense}. More importantly, Figure \ref{fig:celebA} shows the importance of careful hyper-parameter tuning: Even in the presence of early-stopping, there is significant variation in the final performance of the models that is drastically larger than the effect of the MTO algorithm choice.

\begin{figure}[h]
\centering
\includegraphics[width=.95\linewidth]{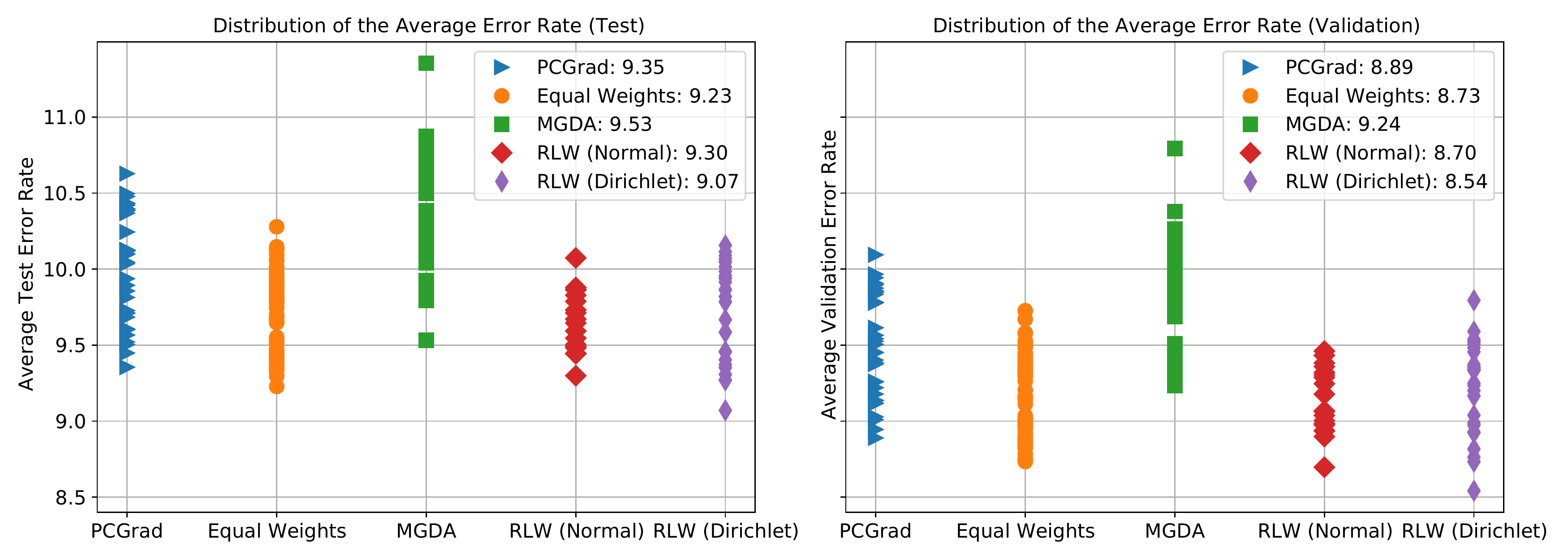}
\caption{For CelebA, the effect of tuning hyper-parameters is much more significant than the effect of the MTO algorithm choice. Each point here corresponds to the performance of an early-stopped model. We vary the learning rate from $10^{-4}$ to $5 \times 10^{-1}$ and  weight decay from $0$ to $5\times 10^{-3}$.  \label{fig:celebA}}
\end{figure}

\begin{figure}[h]
\centering
\includegraphics[width=.5\linewidth]{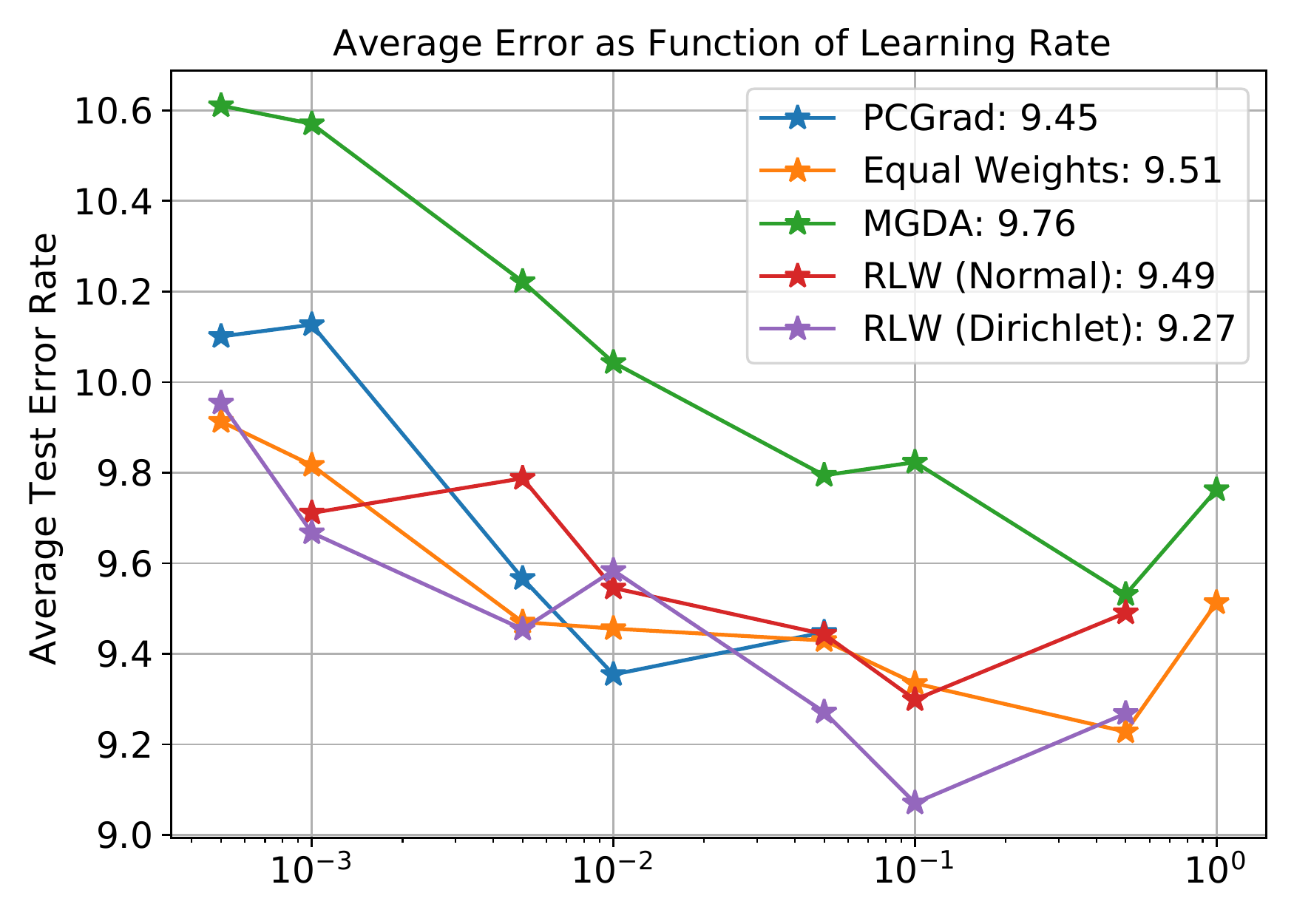}
\caption{Average test error as a function of learning rate for each MTO on CelebA. The performance ranking of the different MTOs is highly dependent on the learning rate. Due to this high variability, studies with sparse learning rate grids can yield misleading conclusions. \label{fig:celebA_lr}}
\end{figure}

This acute hyperparameter sensitivity can lead to a scenario where insufficient tuning of baseline hyper-parameters gives illusions of significant performance gains. We suspect such evaluation challenges play a prominent role in the significant disagreements we observe in the literature on the effect and ranking of MTO algorithms. Table \ref{tab:celebA} presents an overview of the results presented in the literature. As the table suggests different papers report wildly different performance for the same algorithm. A large fraction of the reported statistics resemble the quantities we observe on the \emph{validation} dataset. This is concerning as validation performance on CelebA tends to be noisy. If only the validation performance is reported, a combination of early stopping and high evaluation frequency can artificially boost the scores. This artificial boost in scores is clearly visible between the left and the right sides of Figure \ref{fig:celebA}. 

\begin{table}[h]
  \caption{An overview of reported results in the literature for CelebA benchmark. There is significant disagreement between reported statistics from different studies. We suspect improper tuning of baseline hyper-parameters is a likely culprit (compare reported statistics with Figure \ref{fig:celebA}). \label{tab:celebA}}
  \label{sample-table}
  \centering
  \begin{tabular}{llllllcc}
    \toprule
    Study  & \cite{chen2020just}     & \cite{sener2018multi} & \cite{yu2020gradient} &  \cite{liu2021towards} & \cite{kurin2022defense}  & Ours (Test) & Ours (Validation)\\
    \midrule
    Scalarization  & $ 8.71$  & $9.62$ & --     & $9.99$  &  $9.1$ &  $9.23$ &  $8.73$\\
    MGDA      & $10.82$  & $8.25$ & $8.95$ & $9.96$  &  $9.78$ &  $9.53$ &  $9.24$\\
    GradNorm  & $8.68$   & $8.44$ & --     & $10.08$ &  -- &  -- &  -- \\
    PCGrad    & $8.72$   & --     & $8.69$ & $10.01$ &  $9.07$ &  $9.35$ &  $8.89$\\
    GradDrop  & $8.52$   & --     & --     & --      &  $9.02$ &  -- &  -- \\
    \bottomrule
  \end{tabular}
\end{table}

\section{Conclusions}
\label{sec:conclusions}

In this paper, we presented a large-scale empirical study examining the effects of multi-task optimization methods. It is often assumed that these algorithms enhance the optimization dynamics of multi-task models and yield desirable solutions that cannot be achieved via scalarization. Our results suggest the contrary. Across a variety of language and vision tasks, we showed that scalarization, with appropriate weights, can match both the optimization and the generalization behaviors of MTO algorithms. As such, in effect, scalarization solutions form a superset for MTO solutions. Our experimental results suggest effective exploration of the scalarization solution set might be a more reliable and effective strategy for boosting the model performance (See Fig \ref{fig:nmt_rofr}). 

Our observations suggest the final performance of multi-task models is highly sensitive to the choice of training hyper-parameters. Often times, the effect size associated with subtle design decisions in the choice of the hyper-parameter grid is orders of magnitude larger than the MTO effect size (See Fig \ref{fig:celebA}). As such, researchers can unknowingly create the illusion of significant performance gains by simply under-tuning the competing baselines. The fact that different studies are reporting drastically different numbers for the same dataset-algorithm pair (Table \ref{tab:celebA}) suggests this phenomenon is prevalent in this literature. 

\paragraph{Limitations and Future Research} Our results suggest that by exploring the scalarization solution space, one can attain performance on par with (or better than) many MTO algorithms. However, the grid search approach we used for computing the scalarization Pareto frontier is computationally prohibitive. Examining strategies for efficiently searching this solution space (such as \cite{graves2017automated, kreutzer2021bandits}) is a fruitful future research direction.

In the paper, we pointed out worrying concerns regarding faulty evaluation and under-tuned baselines. A natural solution to alleviate these problems is to adopt the Common Task Framework (CTF) \cite{donoho201750, liberman2010obituary} to reliably identify and measure algorithmic improvements in multi-task optimization. With the creation of a commonly used competitive benchmark with a proper validation / test split, baselines will naturally become stronger as subsequent papers progressively improve performance—this makes substantial gains more convincing than current practice where baselines rerun by the authors themselves. We postpone the development of such pipeline to future work. 

Finally, to keep the discussion tractable, we focused our analysis to supervised learning benchmarks. Whether the same behavior holds for reinforcement learning and self-supervised learning setups is still an open question.

\begin{ack}
We thank George E. Dahl, Wolfgang Macherey, and Macduff Hughes for their constructive comments on the initial version of this manuscript. Additionally, we thank Sourabh Medapati and Zachary Nado for their help in debugging our code base. Moreover, we are grateful to Soham Ghosh and Mojtaba Seyedhosseini for valuable discussions regarding the role of MTOs in large-scale models. 
\end{ack}


\newpage
\bibliographystyle{plain}
\bibliography{bibliography}

\section*{Checklist}


\begin{enumerate}

\item For all authors...
\begin{enumerate}
  \item Do the main claims made in the abstract and introduction accurately reflect the paper's contributions and scope?
    \answerYes{See Sections \ref{subsec:nmt}}
  \item Did you describe the limitations of your work?
    \answerYes{See Section \ref{sec:conclusions}}
  \item Did you discuss any potential negative societal impacts of your work?
    \answerNA{}
  \item Have you read the ethics review guidelines and ensured that your paper conforms to them?
    \answerNA{}
\end{enumerate}

\item If you are including theoretical results...
\begin{enumerate}
  \item Did you state the full set of assumptions of all theoretical results?
    \answerYes{See Appendix}
        \item Did you include complete proofs of all theoretical results?
    \answerYes{See Appendix}
\end{enumerate}

\item If you ran experiments...
\begin{enumerate}
  \item Did you include the code, data, and instructions needed to reproduce the main experimental results (either in the supplemental material or as a URL)?
    \answerYes{See appendix for implementation details. The code will be made public after the review period}
  \item Did you specify all the training details (e.g., data splits, hyperparameters, how they were chosen)?
    \answerYes{See Section \ref{subsec:nmt}}
        \item Did you report error bars (e.g., with respect to the random seed after running experiments multiple times)?
    \answerYes{See Figure \ref{fig:nmt_hparams}}
        \item Did you include the total amount of compute and the type of resources used (e.g., type of GPUs, internal cluster, or cloud provider)?
    \answerYes{Compute details are provided in the appendix}
\end{enumerate}

\item If you are using existing assets (e.g., code, data, models) or curating/releasing new assets...
\begin{enumerate}
  \item If your work uses existing assets, did you cite the creators?
    \answerYes{See Section \ref{subsection:celebA}}
  \item Did you mention the license of the assets?
    \answerNA{}
  \item Did you include any new assets either in the supplemental material or as a URL?
    \answerNA{}
  \item Did you discuss whether and how consent was obtained from people whose data you're using/curating?
    \answerNA{}
  \item Did you discuss whether the data you are using/curating contains personally identifiable information or offensive content?
    \answerNA{}
\end{enumerate}

\item If you used crowdsourcing or conducted research with human subjects...
\begin{enumerate}
  \item Did you include the full text of instructions given to participants and screenshots, if applicable?
    \answerNA{}
  \item Did you describe any potential participant risks, with links to Institutional Review Board (IRB) approvals, if applicable?
    \answerNA{}
  \item Did you include the estimated hourly wage paid to participants and the total amount spent on participant compensation?
    \answerNA{}
\end{enumerate}

\end{enumerate}


\newpage
\appendix

\section{NMT Training Setup} \label{app:nmt_details}
In this appendix, we provide full details of our experimental setup for Section \ref{subsec:nmt}. All models use pre-LN encoder-decoder transformer architecture. The base model, used for the majority of the experiments of this section, has $3$ encoder layers and $3$ decoder layers. Note that we intentionally chose a small model to exacerbate interference among the tasks and make our experimental setup more favorable to MTO algorithms. Following the NMT literature convention, our models are trained with $0.1$ label smoothing and $0.1$ dropout \cite{srivastava2014dropout} for feed-forward and attention layers. We use a sentence piece vocabulary of size $64$K for our models. Table \ref{tab:arch_details} provides the architecture details.

\begin{table}[h]
  \caption{Overview of network and optimizer hyper-parameters. \label{tab:arch_details}}
  \centering
  \begin{tabular}{ll}
    \toprule
    Hyper-parameter    & \\
    \midrule
    Feed-forward dim   &   $2048$  \\
    Model dim   &   $512$     \\
    Attention heads   &   $8$  \\
    Attention QKV dim   &   $512$  \\
    Label smoothing & $0.1$ \\
    Dropout & $0.1$ \\
    \midrule
    Batch-size &   $1024$  \\
    Warm-up steps &   $40$K  \\
    \bottomrule
  \end{tabular}
\end{table}

Models are trained using Adam optimizer \cite{kingma2014adam} with a fixed batch-size of $1024$. En$\rightarrow$\{Zh, Fr\} models are trained for $530038$ steps while the rest of models (due to smaller training data size) are trained for $397529$ steps. For all the runs, we use $40$K steps of linear warm-up and then use a learning rate schedule of the form
\begin{align*}
    \frac{\eta}{\sqrt{t}}, \qquad \eta: \mbox{base learning rate,} \qquad t: \mbox{training step.}
\end{align*}
For each model run, we sweep for $\eta$ in the grid $\{0.05, 0.1, 0.5, 1.0, 2.5, 5.0\}$. Often times, $\eta = 0.5$ yields the optimal performance and $\eta = 5.0$ diverges. For sampling experiments, we sweep the rate for En$\rightarrow$Fr in the grid $\{i / 10 \}_{i=1}^9$. This determines the rate for the other language pair automatically. As such, to derive each scalarization front, we train a total of $54$ models. 

Some of the MTO algorithms under our investigation have algorithm-specific hyper-parameters. In particular, RLW \cite{lin2021closer} requires specifying the task weight distribution and GradNorm \cite{chen2018gradnorm} requires specifying a parameter $\alpha$. For RLW, we examined Gaussian and Dirichlet distributions and presented the results separately in our plots. For GradNorm, we sweep for $\alpha$ in the grid $\{0.25, 0.5, 0.75, 1.0, 1.25, 1.5\}$ and present all non-Pareto dominated models.

When examining the generalization performance (left hand side of Figures \ref{fig:nmt_frzh}, \ref{fig:nmt_defr}, and \ref{fig:nmt_rofr}) we use early stopping: we evaluate the model every $5000$ steps and use the step that yields the smallest average validation loss for the two tasks. For En$\rightarrow$\{Zh, Fr\} and En$\rightarrow$\{De, Fr\} models, it is often the case that the final step is the optimal step. As such early stopping doesn't significantly change the picture. For En$\rightarrow$\{Ro, Fr\}, performance statistics change noticeably with early stopping but the overall qualitative picture remains the same. For the training performance (right hand side of Figures \ref{fig:nmt_frzh}, \ref{fig:nmt_defr}, and \ref{fig:nmt_rofr}) we report the final step training statistics. 

\clearpage
\section{Additional Results} \label{app:nmt_results}
In this appendix section, we provide additional performance comparisons for NMT models trained in Section \ref{subsec:nmt}.

\begin{figure}[h]
\centering
\includegraphics[width=\linewidth]{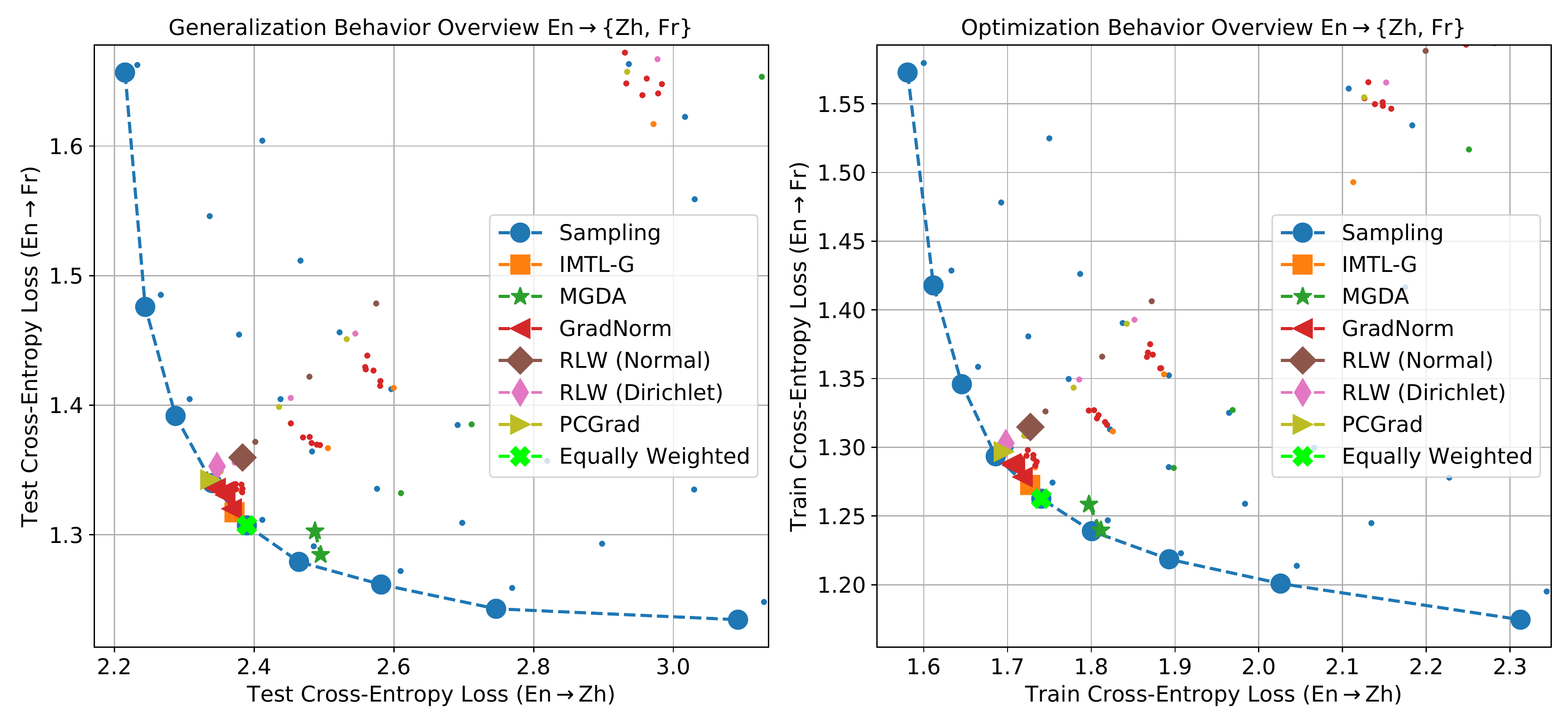}
\caption{The full generalization / optimization performance overview for En$\rightarrow$\{Zh, Fr\} models. Small dots correspond to Pareto dominated models excluded from Figure \ref{fig:nmt_frzh} to avoid clutter. Pareto dominated trade-off curves correspond to models trained with suboptimal base learning rate. \label{fig:app_zhfr_all}}
\end{figure}

\begin{figure}[h]
\centering
\includegraphics[width=\linewidth]{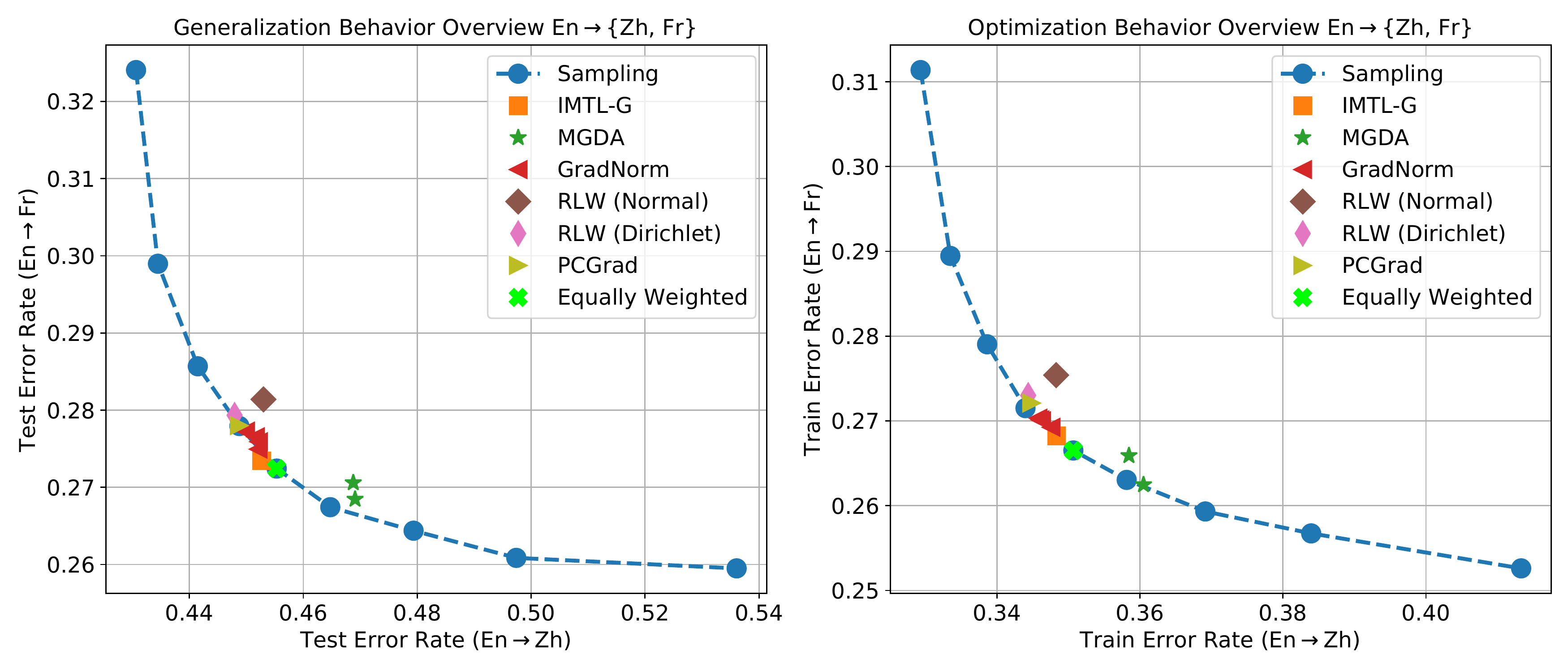}
\caption{Observations of Section \ref{subsec:nmt} generalize across the choice of performance metrics. \emph{Left:} Next token prediction error rate evaluated on the validation data. \emph{Right:} Next token prediction error rate evaluated on the training data. \label{fig:app_zhfr_err} }
\end{figure}

In order to avoid the artifacts and complexities decoding, in the main text, we used cross-entropy loss as the main evaluation metric for models in Section \ref{subsec:nmt}. To complete the picture, Figure \ref{fig:app_bleu} examines the quality of generated translations (as measured by (Sacre-)BLEU score \cite{papineni2002bleu, post2018call}). All translations are generated via Beam-Search with beam size of $4$. Note that, for the sake of computational tractability, we do not optimize the decoding algorithm hyper-parameters for each model. As such the performance trade-off frontier is more noisy. 

\begin{figure}[h]
\centering
\includegraphics[width=\linewidth]{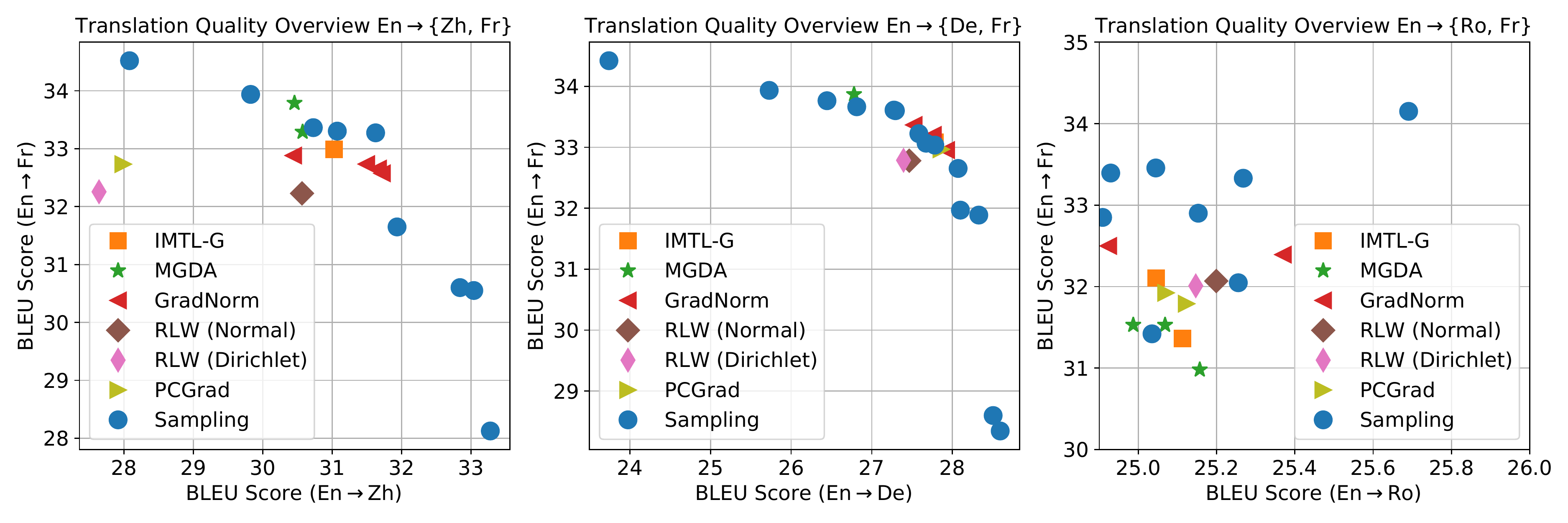}
\caption{Translation quality of our models as measured by BLEU score. For En$\rightarrow$\{Ro, Fr\} models (right) scalarization clearly outperforms the rest of the optimizers. \label{fig:app_bleu} }
\end{figure}

\clearpage
\section{Vision Benchmarks} \label{app:vision}

We analyze results on three main vision benchmarks used in multi-task optimization, Multi-MNIST \cite{sener2018multi}, CelebA \cite{liu2015deep} and CityScapes \cite{cordts2016cityscapes}. Multi-MNIST is a two task dataset, which uses the handwritten digits of MNIST but overlays a right digit and a left digit over each other. CelebA is a dataset of celebrity faces and is cast as a 40-task classification problem; each task predicts a different attribute of the face. Finally, CityScapes is a dataset for understanding urban street scenes. In our setting, it is a two task problem with one task being 7-class semantic segmentation and the other being depth estimation.

We would like to thank Lin et al. (2021) \cite{lin2021closer} and Sener et al. (2018) \cite{sener2018multi} for publicly releasing their code. Our CelebA and Multi-MNIST experiments heavily utilize code from Sener et al. and our CityScapes experiments heavily utilize code from Lin et al. For CelebA and Multi-MNIST, our primary changes to the code base include integrating more optimization algorithms, speeding up the dataloaders via the Tensorflow datasets library and creating a validation set for Multi-MNIST by partitioning the training set. Our validation set for Multi-MNIST is $12000$ images, while our training set is $48000$ images. We use the original MNIST testing set as our test set, but transformed to a multi-task setting. For CityScapes, we primarily changed the dataloader to have it pre-load images into memory, added statistic tracking for the validation set, and integrated other optimizers.

\subsection{Hyper-Parameter and Experiment Details}
\label{subsection:hyperparams}
\paragraph{Multi-MNIST}

For all optimizers, we searched through all combinations of learning rate $\eta \in [0.001, 0.005, 0.01, 0.05, 0.1, 0.5, 1.0, 5.0]$, and dropout rate $\gamma \in [0.1, 0.2, 0.3, 0.4, 0.5]$. We use a LeNet architecture detailed in Sener et al. (2018) \cite{sener2018multi} with two fully-connected layers devoted for each task. For GradNorm specifically, we also search through $\alpha \in [0.5, 1.0, 1.5, 2.0]$. Our learning rate follows a step-wise scheduler with a multiplicative factor of $0.85$ every $30$ epochs. To create our dataset, we follow steps outlined in Sener et al. (2018), overlaying two random digits on top of each other, one positioned at the top left, and the other at the bottom left. We then resize the image to $28 \times 28$. We use batch size of $256$ and SGD with momentum of $0.8$.

\paragraph{CelebA} Similarly our hyper-parameter search for CelebA included all combinations of learning rate $\eta \in [0.0001, 0.0005, 0.001, 0.005, 0.01, 0.05, 0.1, 0.5, 1.0]$ and weight decay $\lambda \in [0, 10^{-5}, 5 \times 10^{-5}, 10^{-4}, 5\times 10^{-4}, 10^{-3}, 5 \times 10^{-3}]$. For GradNorm, we search through $\alpha \in [0.5, 1.0, 1.5, 2.0]$. Our learning rate schedule was the same as the one for Multi-MNIST and we use a batch size of $256$. For CelebA we also use SGD with momentum of $0.8$. The model follows the one detailed in Sener et al. (2018).

\paragraph{Cityscapes} Here our hyper-parameter search implements something slightly different. We search through all combinations of learning rates $\eta \in [10^{-5}, 10^{-4.5}, 10^{-4}, 10^{-3.5}, 10^{-3}, 10^{-2.5}, 10^{-2}]$ and weight decay $\lambda \in [0, 10^{-6}, 10^{-5.5}, 10^{-5}, 10^{-4.5}, 10^{-4}, 10^{-3.5}, 10^{-3}, 10^{-2.5}, 10^{-2}]$. For GradNorm, we search through $\alpha \in [0.5, 1.0, 1.5, 2.0]$. We use a batch size of $64$ for all optimizers. We split the training data set of $2975$ images into a validation set of $595$ with the rest being our actual training set and we use the original validation set of $500$ images as our test set. All images are resized to $128 \times 256$ and we use Adam \cite{kingma2014adam} as our base optimizer. For model we use the architecture utilizing ResNet-50 as a shared encoder detailed in Lin et al. (2021) \cite{lin2021closer}.

\subsection{Additional Comparisons}
We present additional metrics from Section \ref{sec:vision} for Cityscapes dataset. The results are presented in Figure \ref{fig:cityscape_pareto}. We compute mIOU for segmentation, and for depth estimation we compute absolute error. All models are trained with early stopping on validation data. The experimental results align closely with our findings in Sections \ref{subsec:nmt} and \ref{sec:vision}.

In figure \ref{fig:multimnist} we also present our results on the Multi-Mnist data set, whose results also align with our previous findings. We see in figure \ref{fig:multimnist} the performance of MTO algorithms on this benchmark again do not out perform scalarization. 

\begin{figure}[h]
\centering
\includegraphics[width=0.6\linewidth]{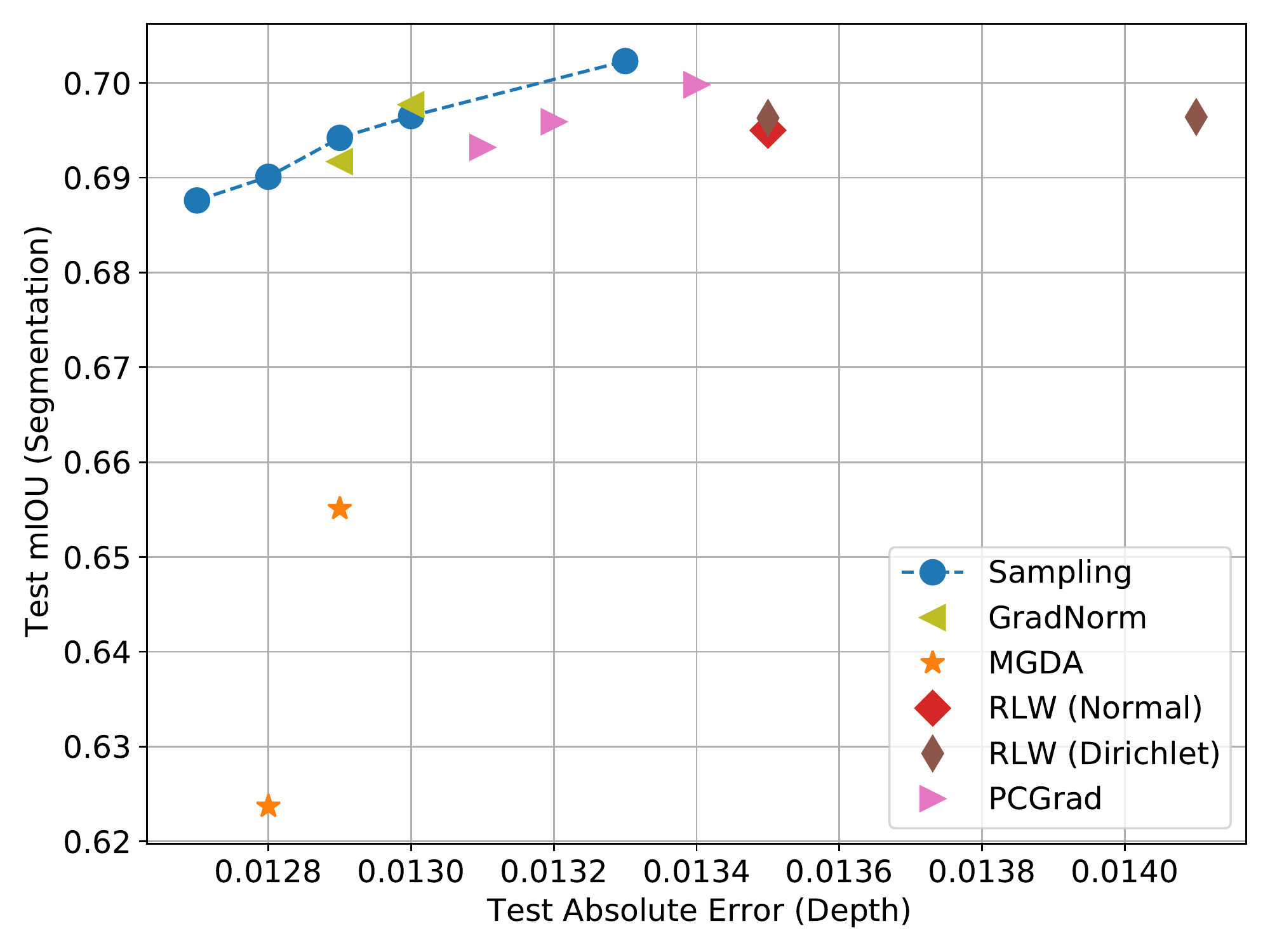}
\caption{ \label{fig:cityscape_pareto} Additional metrics for the generalization performance of different optimizers on the Cityscapes benchmark. We have test segmentation mIOU (y-axis) and test depth absolute error (x-axis).}
\end{figure}

\begin{figure}[h]
\centering
\includegraphics[width=0.6\linewidth]{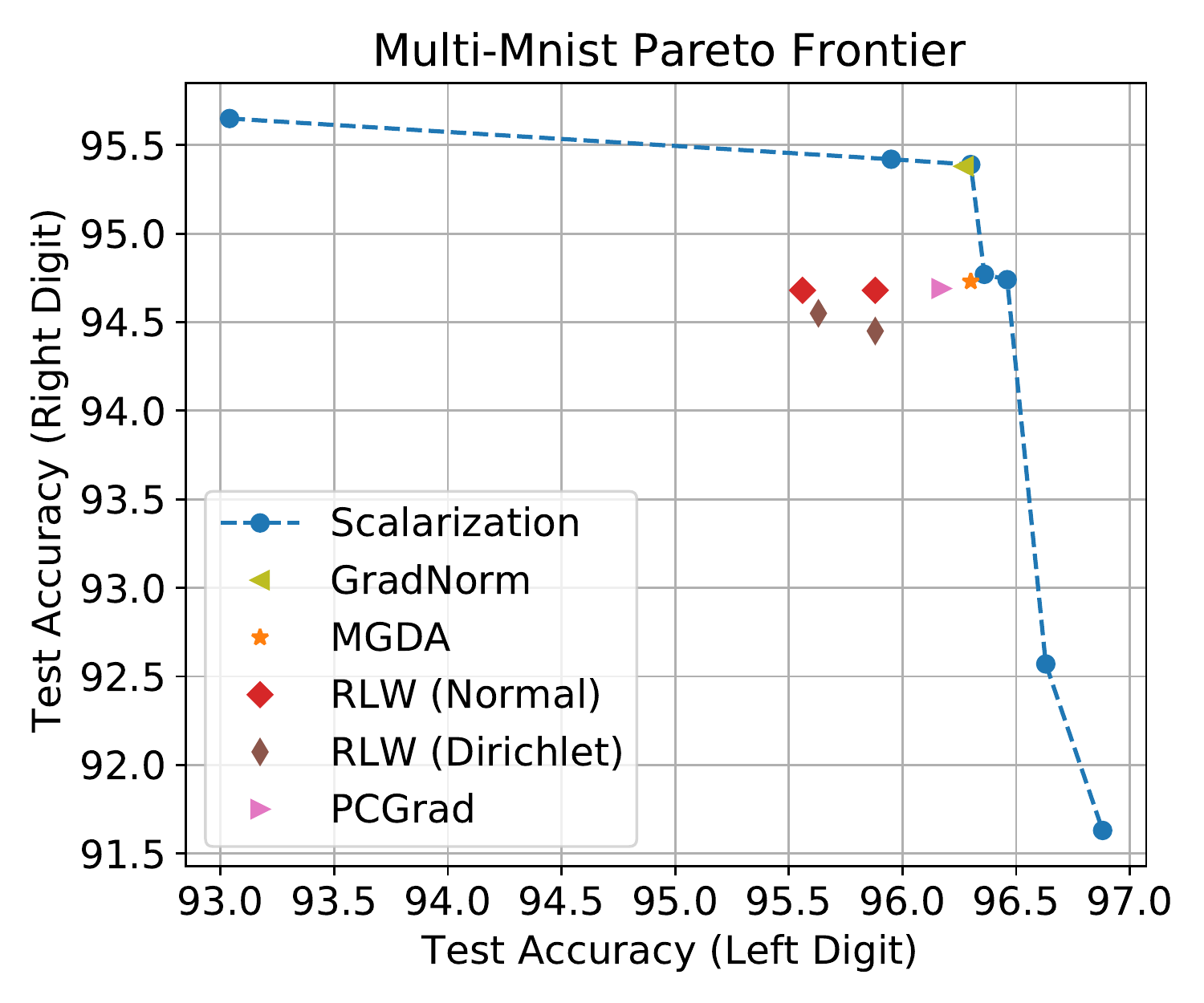}
\caption{Test accuracy behavior on Multi-MNIST dataset aligns with our observation of Section \ref{subsec:nmt}.  \label{fig:multimnist}}
\end{figure}

\clearpage
\section{Theorem Statements and Proofs}

In Section \ref{sec:setting}, we briefly discussed theoretical guarantees for Scalarization. In this appendix section, we make these statements explicit. The theorem statements and their proofs closely mirror the discussion in Section 4.7 of \cite{boyd2004convex}.

\begin{theorem} \label{thm:optimal_scalarization}
Let $\hat{\bth}(\bw) \in \arg \min_{\bth} \cL(\bth; \bw)$ for $\bw > 0$. Then $\hat{\bth}(\bw)$ is Pareto optimal. 
\end{theorem}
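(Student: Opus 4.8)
The plan is to argue by contradiction, exploiting the strict positivity of the weight vector $\bw$ to turn a hypothetical Pareto dominance into a strict decrease of the scalarized objective, thereby contradicting the optimality of $\hat{\bth}(\bw)$.

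First I would suppose, for contradiction, that $\hat{\bth}(\bw)$ is \emph{not} Pareto optimal. By the definition of Pareto optimality given in Section \ref{sec:setting}, this means there exists some $\bth' \in \R^p$ that Pareto dominates $\hat{\bth}(\bw)$; that is, $\cL_i(\bth') \leq \cL_i(\hat{\bth}(\bw))$ for every $1 \leq i \leq K$, with strict inequality $\cL_j(\bth') < \cL_j(\hat{\bth}(\bw))$ for at least one index $j$.

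Next I would compare the scalarized objectives at $\bth'$ and at $\hat{\bth}(\bw)$. Multiplying each inequality $\cL_i(\bth') \leq \cL_i(\hat{\bth}(\bw))$ by the corresponding weight $\bw_i$ and summing over $i$, the key observation is that because every $\bw_i > 0$, the single strict inequality at index $j$ propagates to a strict inequality for the weighted sum:
\begin{align*}
\cL(\bth'; \bw) = \sum_{i=1}^K \bw_i \cL_i(\bth') < \sum_{i=1}^K \bw_i \cL_i(\hat{\bth}(\bw)) = \cL(\hat{\bth}(\bw); \bw).
\end{align*}
This contradicts the assumption that $\hat{\bth}(\bw)$ minimizes $\cL(\cdot; \bw)$, completing the argument.

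The proof is essentially elementary and requires neither convexity nor smoothness of the $\cL_i$; the only place demanding care is the propagation step, where one must verify that strict positivity of all the weights (rather than mere nonnegativity) is exactly what converts ``$\leq$ everywhere, $<$ somewhere'' into a strict inequality of the weighted sums. If the weights were only nonnegative, the dominating coordinate $j$ could carry zero weight and the argument would break down, so the sole subtlety is to ensure the hypothesis $\bw > 0$ is invoked in precisely the right spot.
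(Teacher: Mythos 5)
Your proof is correct and follows essentially the same argument as the paper's: assume non-optimality, take a Pareto-dominating point $\bth'$, and use the strict positivity of $\bw$ to obtain $\cL(\bth'; \bw) < \cL(\hat{\bth}(\bw); \bw)$, contradicting minimality. Your added remark about why $\bw > 0$ (rather than $\bw \geq 0$) is essential is a nice clarification but does not change the substance.
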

\begin{proof}
Let's assume the contrary. In this case, by the definition of Pareto optimality, $\exists \bth'$ s.t. $\forall 1 \leq i \leq K$, $\cL_i(\bth') \leq \cL_i(\hat{\bth}(\bw))$ and for at least one task $j$,  $\cL_j(\bth') < \cL_j(\hat{\bth}(\bw))$. As such, given that $\bw > 0$, we have
\begin{align*}
    \cL(\bth'; \bw) = \sum_{i=1}^K \bw_i \cL_i(\bth') < \sum_{i=1}^K \bw_i \cL_i(\hat{\bth}(\bw)) = \cL(\hat{\bth}(\bw); \bw)
\end{align*}
which contradicts our assumption that $\hat{\bth}(\bw)$ is a minimizer of the problem.
\end{proof}

\begin{theorem} \label{thm:partial_converse}
Let $\{\cL_i\}_{i=1}^K$ be convex. Also let $\bth^{\#}$ be an arbitrary point on the Pareto frontier. Then $\exists \bw \geq 0$, $\bw \not = 0$ such that $\bth^{\#} \in \arg \min_{\bth} \cL(\bth; \bw)$.
\end{theorem}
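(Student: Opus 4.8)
The plan is to reduce the statement to the supporting hyperplane theorem for convex sets, following the convex-duality argument of Boyd \& Vandenberghe. The central object I would introduce is the \emph{achievable-loss set}
\[
A = \bigl\{ \mathbf{u} \in \R^K : \exists\, \bth \in \R^p \text{ with } \cL_i(\bth) \leq \mathbf{u}_i \text{ for all } 1 \leq i \leq K \bigr\},
\]
the ``upward shadow'' of the image of the vector loss $\bth \mapsto (\cL_1(\bth), \dots, \cL_K(\bth))$. First I would verify two structural properties of $A$. Convexity follows from convexity of each $\cL_i$: if $\mathbf{u}, \mathbf{v} \in A$ are witnessed by $\bth_u, \bth_v$, then $\lambda \bth_u + (1-\lambda)\bth_v$ witnesses $\lambda \mathbf{u} + (1-\lambda)\mathbf{v} \in A$. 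Upward closure is immediate from the definition: if $\mathbf{u} \in A$ and $\mathbf{v} \geq \mathbf{u}$ componentwise, the same witness shows $\mathbf{v} \in A$. This monotonicity is exactly what will pin down the sign of the weights later.

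Writing $\mathbf{u}^{\#} = (\cL_1(\bth^{\#}), \dots, \cL_K(\bth^{\#})) \in A$, the next step is to argue that $\mathbf{u}^{\#}$ is not an interior point of $A$. Indeed, were it interior, then $\mathbf{u}^{\#} - \epsilon \id \in A$ for some small $\epsilon > 0$, yielding a $\bth'$ with $\cL_i(\bth') \leq \cL_i(\bth^{\#}) - \epsilon < \cL_i(\bth^{\#})$ for every $i$; this strictly dominates $\bth^{\#}$ and contradicts its Pareto optimality. With $\mathbf{u}^{\#} \notin \mathrm{int}(A)$ in hand, the supporting hyperplane theorem (applicable since $A$ is convex) furnishes a nonzero $\bw$ with $\bw^\top \mathbf{u} \geq \bw^\top \mathbf{u}^{\#}$ for all $\mathbf{u} \in A$.

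It then remains to extract the sign constraint and close the argument. Using upward closure, $\mathbf{u}^{\#} + t\, \mathbf{e}_k \in A$ for every $t \geq 0$ and every coordinate $k$, so the supporting inequality gives $t\, \bw_k \geq 0$ for all $t \geq 0$, hence $\bw_k \geq 0$; thus $\bw \geq 0$, and $\bw \neq 0$ by the hyperplane theorem. Finally, for an arbitrary $\bth$ the vector $(\cL_1(\bth), \dots, \cL_K(\bth))$ lies in $A$, so the same inequality reads $\cL(\bth; \bw) = \sum_{i=1}^K \bw_i \cL_i(\bth) \geq \bw^\top \mathbf{u}^{\#} = \cL(\bth^{\#}; \bw)$, which shows $\bth^{\#} \in \arg\min_{\bth} \cL(\bth; \bw)$ with the nonnegative nonzero $\bw$ just constructed.

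The main obstacle, and the single place where convexity is indispensable, is establishing convexity of $A$ so that the supporting hyperplane theorem applies; this is precisely why the result is only a \emph{partial} converse and does not extend to the nonconvex neural-network setting. A minor subtlety I would be careful about is that $A$ need not have nonempty interior, so ``boundary point'' should be read as ``not an interior point''; the supporting hyperplane theorem still applies to any non-interior point of a convex set, and the Pareto argument only ever needs $\mathbf{u}^{\#} \notin \mathrm{int}(A)$, so nothing breaks.
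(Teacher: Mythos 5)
Your argument is correct and is precisely the standard supporting-hyperplane proof that the paper itself defers to: the paper gives no proof of its own and simply cites Section 4.7.4 of Boyd and Vandenberghe, where the achievable-set/supporting-hyperplane construction you describe is carried out. One minor remark: since $A$ is nonempty and upward closed it contains an open translate of the positive orthant, so $\mathrm{int}(A) \neq \emptyset$ automatically and the caveat in your final paragraph about possibly empty interior is unnecessary (though harmless).
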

\begin{proof}
See Section 4.7.4 of \cite{boyd2004convex}.
\end{proof}

\section{Compute Resources}
For the NMT experiments, we trained a total of $589$ models. Each experiment was trained on Google Cloud Platform v3 TPUs for a period of 12-28 hours. For the vision benchmarks we trained a total of $1960$ models for CityScapes, $1008$ models for CelebA, and $720$ models for Multi-Mnist. Each being trained on an Nvidia A100 GPU. 

\end{document}